\title{\textbf{Source-Sensitive Belief Change}}
\author{Shahab Ebrahimi\\
Amirkabir University of Technology, Tehran, Iran\\
sh.ebrahimi@aut.ac.ir}
\date{}
\newtheorem{theorem}{Theorem}[section]
\newtheorem{definition}{Definition}[section]
\begin{document}

\maketitle

\textbf{\textit{ABSTRACT}}

\textit{The AGM model is the most remarkable framework for modeling belief revision. However, it is not perfect in all aspects. Paraconsistent belief revision, multi-agent belief revision and non-prioritized belief revision are three different extensions to AGM to address three important criticisms applied to it. In this article, we propose a framework based on AGM that takes a position in each of these categories. Also, we discuss some features of our framework and study the satisfiability of AGM postulates in this new context.}

\textbf{\textit{KEYWORDS}}

\textit{belief revision, the AGM model, multi-source belief revision, non-prioritized belief revision, paraconsistent belief revision, the logic PAC}

\section{Introduction}

\thispagestyle{firstStyle}

Belief revision is a scientific field of research in the intersection of epistemology, logic and artificial science. The main goal of belief revision is to provide a logical framework for modeling the process of belief change of a rational agent. AGM (\cite{AGM85, Mak85, Gär88}) is the most popular model for this aim. In AGM, a knowledge state is represented by a logically closed set of propositions called belief set. Given a proposition as an input, three types of change are possible: expansion, revision and contraction. Expansion and revision are both about adding a new belief to the belief set, but in the former, keeping the consistency of the set is a consideration. Contraction, however, deals with retracting an old belief. Applying any of these changes to a belief set results a new set that according to AGM should satisfy a set of postulates ordained with respect to several rational criteria.

Beside all advantages of AGM that have been explored by authors, there are several criticisms have been noticed to show that it has some counterintuitive features. Thus, various extensions to AGM are proposed to address such criticisms (\cite{FH11}). Multi-agent belief revision, non-prioritized belief revision and paraconsistent belief revision are three extensions proposed for three different criticisms of AGM.

Multi-agent belief revision is introduced in order to enable AGM to handle belief changes in multi-agent environments; whether we face multiple agents that change their beliefs, or have multiple sources that utters new propositions for making belief changes. The latter case is called multi-source belief revision.

The main goal of non-prioritized belief revision is different. In AGM, every input information has higher priority than existing beliefs. In other words, accepting a new proposition or retracting a non-tautological one could be done without any obligation. However, this feature of AGM seems to be an ideal assumption since we intuitively don't accept every information passed to us by every source. All multi-source models could be considered as non-prioritized, because it is impossible to differentiate between the sources if we accept all the inputs of all of them. However, the reverse is not true. There are several non-prioritized approaches that are defined in single-agent environments and are sensitive only to input propositions.

In paraconsistent belief revision, the main problem is that it is impossible to have two different inconsistent belief sets. Because by the rules of classical logic, every inconsistent belief set is equivalent to $K_\bot$, the set of all propositions. This feature seems counterintuitive too. Whether the propositions are perceived as rational beliefs or practical data, it seems plausible to be able to make meaningful deductions in case of inconsistencies. Also, handling such situations seems reasonable in multi-source context. Because even fully self-consistent and rational sources can utter contradicting information and in some cases, there are not enough reasons to accept only one of them.

As it is clear, these three different criticisms have some intersections in their motivations and it is reasonable to study models that coalesce all of them together. In the following, we try to define a framework based on AGM that leverages the benefits of the three aforementioned extensions.

\section{The Approach}

In this section, we describe our approach for handling the mentioned criticisms.

\subsection{For Paraconsistency}

First change we apply, is to use a paraconsistent logic as our background logic. Paraconsistent logic is a class of non-classical logics for which ECQ\footnote{ex contradiction quodlibet} fails, i.e., the relation $q \in Cn(\{p, \neg p \})$ is not valid in those systems. Therefore, they allow a theory to contain inconsistencies while not deducing all propositions. There are many approaches for constructing such systems (\cite{Pri02}) and the question is which one performs the best in the AGM model. Here, we use a three-valued paraconsistent logic, called PAC (\cite{Avr91}). This logic is an enrichment of the logic LP (\cite{Pri79}) with an implication connective $\to$ for which modus ponens\footnote{$\{p , p \to q \} \vdash q$.} and the deduction theorem\footnote{If $A \cup \{ p \} \vdash q$, then $A \vdash p \to q$.} hold for it. In PAC, we have three values, denoted by $1$, $-1$ and $0$. The interpretations of them are respectively ``true'', ``false'' and ``both true and false''. Hence, the first two values are like the values of classical logic for representing ``truth'' and ``falsity'' and the third value, $0$, refers to ``paradoxical'' propositions, which are both true and false at the same time. We take $1$ and $0$ as designated values in PAC. Therefore, the set of values in PAC is $V=\{1,0,-1\}$ and the set of designated values is $D=\{1,0\}$. In PAC, four connectives $\land$, $\lor$, $\to$ and $\neg$ are defined respectively for ``conjunction (and)'', ``disjunction (or)'', ``material implication (if...then)'' and ``negation (not)'' by table \ref{PACCM}. Two constants $\bot$ and $\top$ are also given with the values ``false'' and ``true''.

\begin{table}[h!]
\centering
\renewcommand{\arraystretch}{1.5}
\begin{tabular} {|c|c c c|c|c|c c c|c|c|c c c|c|c|c|}
\cline{1-4}
\cline{6-9}
\cline{11-14}
\cline{16-17}
$\land$ &1&0&-1&&$\lor$&1&0&-1&&$\to$&1&0&-1&&$\neg$&\\
\cline{1-4}
\cline{6-9}
\cline{11-14}
\cline{16-17}
1&1&0&-1&&1&1&1&1&&1&1&0&-1&&1&-1\\
0&0&0&-1&&0&1&0&0&&0&1&0&-1&&0&0\\
-1&-1&-1&-1&&-1&1&0&-1&&-1&1&1&1&&-1&1\\
\cline{1-4}
\cline{6-9}
\cline{11-14}
\cline{16-17}
\end{tabular}
\caption{The matrices of the connectives in PAC.}
\label{PACCM}
\end{table}

The consequence relation of PAC ($\vdash$) is defined as follows: $\{p_1,p_2,...,p_n \} \vdash p$ if and only if every valuation which assigns either $1$ or $0$ to all $p_i$, does the same to $p$. It is possible for a valuation $v$, such that $v(p)=v(\neg p)=0$ and $v(q)=-1$. Therefore, $\{ p , \neg p \} \nvdash q$ and ECQ fails in this logic. The Hilbert-style formulation of PAC can be given by table \ref{NSPAC}. In this table, the connective $\leftrightarrow$ is defined as follows: $p \leftrightarrow q:= (p \to q) \land (q \to p)$.

\begin{table}[h!]
\centering
\renewcommand{\arraystretch}{1.5}
\begin{tabular}{|l|}
\cline{1-1}
Axioms:\\
\qquad \textbullet \quad $p \to (q \to p)$\\
\qquad \textbullet \quad $(p \to (q \to r)) \to ((p \to q) \to (p \to r))$\\
\qquad \textbullet \quad $((p \to q) \to p) \to p$\\
\qquad \textbullet \quad $p \land q \to p$\\
\qquad \textbullet \quad $p \land q \to q$\\
\qquad \textbullet \quad $p \to (q \to p \land q)$\\
\qquad \textbullet \quad $p \to p \lor q$\\
\qquad \textbullet \quad $q \to p \lor q$\\
\qquad \textbullet \quad $(p \to r) \to ((q \to r) \to (p \lor q \to r))$\\
\qquad \textbullet \quad $\neg (p \lor q) \leftrightarrow \neg p \land \neg q$\\
\qquad \textbullet \quad $\neg (p \land q) \leftrightarrow \neg p \lor \neg q$\\
\qquad \textbullet \quad $\neg \neg p \leftrightarrow p$\\
\qquad \textbullet \quad $\neg (p \to q) \leftrightarrow p \land \neg q$\\
\qquad \textbullet \quad $p \lor \neg p$\\
\cline{1-1}
Rule of Inference:\\
\qquad \textbullet \quad
\begin{tabular}{c}
$p \qquad p \to q$\\
\hline
$q$\\
\end{tabular}\\
\cline{1-1}
\end{tabular}
\caption{Hilbert-style system of PAC.}
\label{NSPAC}
\end{table}

PAC has lots of advantages that makes it a suitable logic for being used for our purposes. On the one hand, by the ideas of society semantics and multi-source epistemic logics introduced in \cite{CL99}, \cite{CD12}, \cite{CD13} and \cite{DP14}, it can be shown that PAC is a reasonable logic for handling inconsistencies in multi-source environment with respect to epistemological aspects. On the other hand, the profits of paraconsistent three-valued logics in handling inconsistent data bases, is discussed in \cite{CMD04}. Thus, it seems PAC is a suitable candidate for our framework both conceptually and practically. Furthermore, it is an ideal and natural paraconsistent logic, as discussed in \cite{AAZ11a}, \cite{AAZ11b} and \cite{AA15} that confirms it behaves like classical logic in the majority of cases.

By having the consequence relation of PAC, we can define its induced logical consequences operator as follow:

\begin{equation}
Cn(A)=\{p: A' \vdash p, \textit{ for a finite } A' \subseteq A \}
\end{equation}

Consequently, the theorem below could be concluded:

\begin{theorem}
$Cn$ satisfies the following conditions:
\begin{itemize}
\item [($Cn$1)]
$A \subseteq Cn(A)$. (inclusion)

\item [($Cn$2)]
$Cn(Cn(A)) \subseteq Cn(A)$. (iteration)

\item [($Cn$3)]
If $A \subseteq B$, then $Cn(A) \subseteq Cn(B)$. (monotony)

\item [($Cn$4)]
If $p \in Cn(A)$, then $p \in Cn(A') $ for some finite set $A'$ of $A$. (compactness)

\item [($Cn$5)]
If $p \in Cn(A \cup \{ q \})$ and $p \in Cn(A \cup \{ r \})$, then $p \in Cn(A \cup \{ q \lor r \})$. (introduction of disjunction in premises)
\end{itemize}
\end{theorem}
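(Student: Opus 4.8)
The plan is to verify each of the five conditions directly from the definition $Cn(A)=\{p : A' \vdash p \text{ for some finite } A' \subseteq A\}$, using only basic structural properties of the PAC consequence relation $\vdash$ as given by its valuational characterization. First I would establish the elementary facts about $\vdash$ that the argument rests on: that $\vdash$ is reflexive in the sense that $\{p\}\vdash p$ (immediate from the semantic definition, since any valuation designating $p$ designates $p$), that it is monotone over premise sets, and that it is transitive (a cut-style property: if $B\vdash p$ for each $p$ in some finite set $C$ and $C\vdash q$, then $B\vdash q$). All three follow routinely from the definition ``every valuation assigning a designated value to all premises assigns a designated value to the conclusion,'' together with the fact that $D=\{1,0\}$ is a fixed set of designated values.

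With those in hand, the five clauses go as follows. For $(Cn1)$, given $p\in A$ take $A'=\{p\}$ and use reflexivity. For $(Cn4)$, this is essentially immediate: $p\in Cn(A)$ means by definition there is a finite $A'\subseteq A$ with $A'\vdash p$, hence $p\in Cn(A')$. For $(Cn3)$, if $A\subseteq B$ and $p\in Cn(A)$, the witnessing finite $A'\subseteq A$ is also a finite subset of $B$, so $p\in Cn(B)$. For $(Cn2)$, suppose $p\in Cn(Cn(A))$; then some finite $\{q_1,\dots,q_n\}\subseteq Cn(A)$ has $\{q_1,\dots,q_n\}\vdash p$, and each $q_i$ has a finite witness $A_i\subseteq A$ with $A_i\vdash q_i$. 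Let $A'=A_1\cup\dots\cup A_n$, a finite subset of $A$; by monotony $A'\vdash q_i$ for each $i$, and then by the cut property $A'\vdash p$, so $p\in Cn(A)$. Note $(Cn2)$ only asserts the $\subseteq$ direction, so the reverse inclusion (which is just $(Cn1)$ applied to $Cn(A)$) is not needed here.

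The only clause requiring genuine engagement with the specifics of PAC is $(Cn5)$, introduction of disjunction in premises. Here I would argue semantically. Assume $p\in Cn(A\cup\{q\})$ and $p\in Cn(A\cup\{r\})$; by compactness we may take a single finite $A_0\subseteq A$ witnessing both, so $A_0\cup\{q\}\vdash p$ and $A_0\cup\{r\}\vdash p$. Take any valuation $v$ assigning a designated value to every formula in $A_0\cup\{q\lor r\}$; I must show $v(p)\in D$. The key lemma is that in PAC, $v(q\lor r)\in\{1,0\}$ implies $v(q)\in\{1,0\}$ or $v(r)\in\{1,0\}$ — equivalently, the only way $v(q\lor r)=-1$ is if $v(q)=v(r)=-1$, which one reads straight off the $\lor$ column of Table~\ref{PACCM} (the value $-1$ appears in the $\lor$-output only at the entry $(-1,-1)$). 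Granting this, if $v(q)\in D$ then $v$ designates all of $A_0\cup\{q\}$, hence $v(p)\in D$ by the first assumption; symmetrically if $v(r)\in D$. Either way $v(p)\in D$, so $A_0\cup\{q\lor r\}\vdash p$ and thus $p\in Cn(A\cup\{q\lor r\})$.

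The main obstacle, such as it is, is not depth but care: one must make sure the finite-witness bookkeeping in $(Cn2)$ and $(Cn5)$ is handled cleanly (taking unions of finitely many finite witnesses, and collapsing two witnesses to one via compactness in $(Cn5)$), and one must correctly verify the disjunction lemma against the three-valued truth table rather than importing the classical reasoning by reflex — though in this case the PAC table for $\lor$ behaves exactly as needed. No appeal to the implication connective, negation, or the Hilbert axioms of Table~\ref{NSPAC} is required; the proof is purely about the structural closure behavior of $Cn$ and one small semantic fact about $\lor$.
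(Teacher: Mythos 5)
Your proposal is correct and follows essentially the same route as the paper's proof: the structural clauses ($Cn$1)--($Cn$4) from the Tarskian properties of $\vdash$ and the finite-witness definition of $Cn$, and ($Cn$5) from the semantic observation that the PAC table gives $v(q \lor r) = -1$ only when $v(q)=v(r)=-1$. You simply spell out the bookkeeping that the paper leaves as ``easily deduced.''
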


\begin{proof}
By the definition of $\vdash$, it is easy to see that it satisfies Tarskian conditions on consequence relations, i.e., reflexivity\footnote{If $p \in A$, then $A \vdash p$.}, transitivity\footnote{If $A \vdash p$ and $B \vdash q$ for every $q \in A$, then $B \vdash p$.} and weakening\footnote{If $A \vdash p$ and $A \subseteq B$, then $B \vdash p$.}. ($Cn$1)-($Cn$3) are the direct results of these conditions.

The satisfiability of ($Cn$4) can be easily followed by definition. Because if $p \in Cn(A)$, it must be at least one finite $A' \subseteq A$, such that $A' \vdash p$. And finally, by the semantics of PAC for $\lor$ and the definition of $\vdash$, ($Cn$5) can be easily deduced.
\end{proof}

\subsection{For Multi-sourceness}

Like the original case, by a belief set $K$ we mean a set of propositions that is closed under logical consequences, i.e., $K=Cn(K)$. However, unlike AGM, the inputs are not represented by a single proposition. In the proposed approach, an input $I$ is described by an ordered pair $(p,s)$ such that the first element is a proposition and the second term is the source that utters $p$ for a particular type of belief change. In the following, we will use $P_I$ and $S_I$ for referring to the proposition and the source of input $I$.

Also, epistemic and reliability functions are utilized to specify the degree of believability for beliefs and inputs, respectively. In order to define these functions, a value set for the range of these functions is required. Each value set $V$ needs at least three elements with a given total order on them, i.e., for every $x,y,z \in V$ we have:

\begin{itemize}
\item
$x \leq y$ or $y \leq x$. (totality)

\item
If $x \leq y$ and $y \leq x$, then $x=y$. (antisymmetry)

\item
If $x \leq y$ and $y \leq z$, then $x \leq z$. (transitivity)
\end{itemize}

It is also assumed that with respect to $\leq$, $V$ has minimum element $b$ and maximum element $t$. In the recent article, $V$ is taken as a fix value set. By adopting the notion of epistemic entrenchment proposed in \cite{GM88}, the notion of epistemic function can be defined. Motivations and properties of epistemic entrenchment fit well to our approach. There is only one critical difference in our framework. Epistemic entrenchment is an order on propositions providing a qualitive relation between them, but here the epistemic function is considered to be a function on propositions and the range of it gives such a relation on propositions. If $PROP$ is the set of all propositions of our language, then we have:

\begin{definition}
For a given belief set $K$, the function $E:PROP \to V$ is an epistemic function if it satisfies:

\begin{description}
\item [($E$1)]
If $p \vdash q$ the $E(p) \leq E(q)$. (dominance)

\item [($E$2)]
For any $p$ and $q$, $E(p) \leq E(p \land q)$ or $E(q) \leq E(p \land q)$. (conjunctiveness)

\item [($E$3)]
When $K \neq K_\bot$, $p \notin K$ if and only if $E(p)=b$. (minimality)

\item [($E$4)]
If $E(p)=t$, then $\vdash p$. (maximality)
\end{description}
\end{definition}

The relation $E(q) \leq E(p)$ states that $p$ is epistemologically more important than $q$, in the sense of retaining to $K$ if one of them must be avoided. The motivations of considering these conditions can be found in \cite{GM88}. Beside epistemic function, we assume another function on inputs that shows the reliabilities of inputs. The notions of reliability and trustworthiness are usually taken as basic factors in communication and social sciences and the effects of them in belief change have been discussed in many articles (\cite{WRK12, Liu02}). For the input $(p,s)$, the reliability of the source $s$ and the amount of its knowledge about $p$ are both important for evaluating the degree of reliability.

Most models for multi-source or non-prioritized belief revision relate such concepts only to input proposition or input source. However, many situations can be given to show that it is plausible to assume both are important for such evaluation. For example, the relation between the trustworthiness of two sources that are expert in two different science fields may not be the only important data, but the content of information they utter may be effective either. This evaluation plays the main role in several field of science and we don't challenge the difficulties of that. Here, it is assumed that such information about reliabilities are given, like epistemic entrenchment order. However, we do not claim that computing reliabilities is an easy task to do. Just devolve it to another research field. 

If $INP$ is the set of all possible inputs, then we have:

\begin{definition}
The function $R: INP \to V$ is a reliability function if it satisfies:
\begin{description}
\item[($R$)]
If $I_1=(p,s)$ and $I_2=(q,s)$ and $p \vdash q$, then $R(I_1) \leq R(I_2)$. (dominance)
\end{description}
\end{definition}

By ($R$), it is said that if a single source utters two propositions such that one of them is the consequence of the other one, then the reliability of the input with the deduced proposition must be greater than the other. Because if $p \vdash q$, then $Cn(q) \subseteq Cn(p)$. Hence, $p$ can be seen as a bigger claim than $q$ for making a change in the beliefs. Intuitively, if the sources are not the same, this postulate should not necessarily hold. Several other conditions can be considered for reliability function, but we will continue with this general definition to keep the situation open for any possible extension in future.

\subsection{For Non-priority}

The properties of $\leq$ on $V$ provide the prerequisites to do a comparison between old beliefs and inputs to make a decision for belief change. Most works for this purpose just get the non-priority of the revision functions under discussion. In \cite{Fer99} two taxonomies are given for classifying the non-prioritized functions proposed for this change; one based on the outputs of them and the other based on their construction process. In the recent framework, the non-priority of every kind of belief changes will be discussed.

Hence, we generalize our approach in order to suit source-sensitive functions, as follows:

\begin{itemize}
\item
Based on the output: ``all or nothing'' approach; that means either it accepts the whole input or leaves the belief set unchanged. 

\item
Based on the construction: ``decision + action'' approach; that means first we decide whether to accept the input or not, then do the change. 
\end{itemize}

\section{Source-sensitive Belief Changes}

In the original manuscripts about AGM, contraction, expansion and revision are considered to be the three main types of belief change that happen to a belief state with respect to to an input. Contraction is defined for eliminating an existing belief and expansion together with revision are defined for adding a new belief. The difference between revision and expansion is that in the former, the result must be consistent, whereas for expansion consistency in not a restriction. By changing our view from consistency to paraconsistency, motivations for taking revision as a main kind of belief change don't have reasonable supports and motivations anymore. As investigated by Tanaka (\cite{Tan05}), by using paraconsistent logics in the AGM model, revision collapses on expansion.

Also, it seems very admissible to agree with this result in our multi-source environment. Consistency of beliefs in this framework can be considered as an effective factor for evaluating reliability, however it is not the only important parameter. Hence, the comparison of $R(I)$ and $E(\neg P_I)$ does not provide sufficient information for accepting or rejecting $I$. Therefore, there cannot be a direct way to define revision and we take expansion and contraction as the only main kinds of belief change.

\subsection{Expansion}

By expanding a belief set $K$ by an input $I$, we want to non-prioritizely have $P_I$ as a belief in the resulted belief set. Since no restriction is considered in the definition of expansion, every new information is welcome. The only condition we consider is $b < R(I)$, that means the reliability of an acceptable input can be anything expect exactly the minimum degree of reliability, that is the amount of the epistemic value of $\bot$. So, we define expansion as follows:

\begin{definition}
For a belief set $K$ and an input $I$, the function $\dot{+}$ is a source-sensitive expansion if and only if:
\begin{center}
$K \dot{+} I=
\begin{cases}
Cn(K \cup \{ P_I \})&\textit{if } b<R(I)\\
K&\textit{otherwise}
\end{cases}$
\end{center}
\end{definition}

The construction is simple. If $b<R(I)$, we add $P_I$ to $K$ and close it under $Cn$, to accept the new logical consequences of $K \cup \{ p \}$ too. On contrary, unlike original cases, our background logic is PAC and also the change may be unsuccessful. If we want to discuss the properties of this function by studying the satisfiability of AGM postulates, we must introduce the modified version of them to have compatible postulates with our framework and its notations. The corresponding postulates to AGM expansion postulates are:

\begin{description}
\item [($\dot{+}$1)]
$K \dot{+} I$ is a belief set. (closure)

\item [($\dot{+}$2)]
$P_I \in K \dot{+} I$ or $K \dot{+} I=K$. (relative success)$^{\ast}$

\item [($\dot{+}$3)]
$K \subseteq K \dot{+} I$. (inclusion)

\item [($\dot{+}$4)]
If $P_I \in K$, then $K \dot{+} I=K$. (vacuity)

\item [($\dot{+}$5)]
If $K_1 \subseteq K_2$, then $K_1 \dot{+} I \subseteq K_2 \dot{+} I$. (monotony)

\item [($\dot{+}$6)]
$K \dot{+} I$ is smaller than any set that satisfies ($\dot{+}$1) and ($\dot{+}$3)-($\dot{+}$5) and contains $P_I$. (minimality)
\end{description}

The only postulate that is not the direct correspondence of its corresponding AGM postulate, is ($\dot{+}$2). Here, we use a weakening of the success postulate\footnote{Success postulate state that if $+$ is an AGM expansion, then $p \in K+p$.}, that shows ``all or nothing'' approach for constructing our non-prioritized expansion function. Also, in the original case, the last postulate specifies a limit on the size of the result. Thus, the important thing is to have a control on the size of the expanded belief set. Therefore, we consider postulate ($\dot{+}$6) only for the case that $P_I$ is accepted in the result. Now, for this function we can show:

\begin{theorem}
Source-sensitive expansion satisfies ($\dot{+}$1)-($\dot{+}$6).
\end{theorem}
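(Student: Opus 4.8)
The plan is to verify each of the six postulates directly from the definition of $\dot{+}$, which splits into the two cases $b < R(I)$ and $b \geq R(I)$ (equivalently $R(I) = b$, since $b$ is the minimum of $V$). In the second case $K \dot{+} I = K$, and since $K$ is already a belief set, ($\dot{+}$1) holds; the right disjunct of ($\dot{+}$2) holds; ($\dot{+}$3), ($\dot{+}$4), ($\dot{+}$5) all hold trivially because both sides (or the relevant containments) reduce to statements about $K$ alone; and ($\dot{+}$6) is vacuous in this case because $P_I$ need not be in the result. So the substance is entirely in the case $b < R(I)$, where $K \dot{+} I = Cn(K \cup \{P_I\})$.

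First I would handle ($\dot{+}$1): $Cn(K \cup \{P_I\})$ is closed under $Cn$ by the iteration property ($Cn$2) together with inclusion ($Cn$1), so it is a belief set. For ($\dot{+}$2), $P_I \in K \cup \{P_I\} \subseteq Cn(K \cup \{P_I\})$ by ($Cn$1), giving the left disjunct. For ($\dot{+}$3), $K \subseteq K \cup \{P_I\} \subseteq Cn(K \cup \{P_I\})$ again by ($Cn$1). For ($\dot{+}$4), if $P_I \in K$ then $K \cup \{P_I\} = K$, so $Cn(K \cup \{P_I\}) = Cn(K) = K$ since $K$ is a belief set. For ($\dot{+}$5), if $K_1 \subseteq K_2$ then $K_1 \cup \{P_I\} \subseteq K_2 \cup \{P_I\}$, and monotony ($Cn$3) gives $Cn(K_1 \cup \{P_I\}) \subseteq Cn(K_2 \cup \{P_I\})$; one should note that whether $b < R(I)$ holds depends only on $I$, not on the belief set, so the two definitions fire in the same case and the comparison is legitimate.

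The only postulate needing a genuine argument is ($\dot{+}$6): I must show that if $X$ is any set satisfying ($\dot{+}$1) and ($\dot{+}$3)--($\dot{+}$5) with $P_I \in X$, then $K \dot{+} I \subseteq X$. The natural reading, following the classical AGM proof, is that $X$ arises as $K \dot\sqcup I$ for some operator $\dot\sqcup$ meeting those conditions, so that by ($\dot{+}$3) applied to that operator $K \subseteq X$ and by hypothesis $P_I \in X$, hence $K \cup \{P_I\} \subseteq X$; then by ($\dot{+}$1), $X$ is a belief set, so $X = Cn(X) \supseteq Cn(K \cup \{P_I\}) = K \dot{+} I$ using monotony ($Cn$3). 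I expect the main obstacle to be purely a matter of stating ($\dot{+}$6) precisely enough that this goes through — i.e.\ clarifying that ``any set that satisfies ($\dot{+}$1) and ($\dot{+}$3)--($\dot{+}$5) and contains $P_I$'' means the output of an alternative operator subject to those closure, inclusion, vacuity and monotony constraints, and that the case $b < R(I)$ is the one under consideration. Once that is pinned down, the inclusion is immediate from closure, inclusion and monotony of $Cn$, exactly mirroring the AGM argument that $K + p$ is the smallest belief set containing $K \cup \{p\}$.
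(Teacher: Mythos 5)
Your proof is correct, and for postulates ($\dot{+}$1)--($\dot{+}$5) it follows essentially the same case-split and the same appeals to ($Cn$1) and ($Cn$3) as the paper. Where you genuinely diverge is ($\dot{+}$6), the only nontrivial postulate. The paper argues by contradiction: it takes $p \in K \dot{+} I \setminus K^\ast$, invokes compactness ($Cn$4) to get a finite $K' \subseteq K \cup \{P_I\}$ with $K' \vdash p$, and then splits on whether $P_I \in K'$, using the deduction theorem of PAC to obtain $P_I \to p \in Cn(K) \subseteq K^\ast$ and modus ponens plus closure of $K^\ast$ to conclude $p \in K^\ast$. You instead argue directly: $K \subseteq K^\ast$ by the inclusion postulate, $P_I \in K^\ast$ by hypothesis, so $K \cup \{P_I\} \subseteq K^\ast$, and monotony ($Cn$3) together with $K^\ast = Cn(K^\ast)$ gives $Cn(K \cup \{P_I\}) \subseteq K^\ast$. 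Your route is shorter and relies only on the abstract Tarskian properties ($Cn$1)--($Cn$3), so it would survive in any background logic with a monotone closure operator; the paper's route is heavier but makes visible exactly where the deduction theorem and modus ponens of PAC enter, which is thematically relevant since the paper is at pains to show PAC retains enough classical machinery for the AGM arguments. Your remark that the phrasing of ($\dot{+}$6) needs to be read as quantifying over outputs of alternative operators satisfying the listed postulates is the same interpretive move the paper makes implicitly when it writes ``by inclusion, $p \in K^\ast$''; and your treatment of the rejection case (where the paper notes $K \subseteq K^\ast$ by inclusion rather than calling the postulate vacuous) is an inessential difference, since your own argument already yields that containment.
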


\begin{proof}
For ($\dot{+}$1), if $b<R(I)$, then by definition, the result is a belief set. Otherwise, the result is $K$ that is our given belief set. For ($\dot{+}$2), if $b < R(I)$, then by ($Cn$1), $P_I \in Cn(K \cup \{ P_I \})$. If not $b < R(I)$, then by definition $K \dot{+} I=K$.

For ($\dot{+}$3), if $b<R(I)$, then by ($Cn$3), $K \subseteq Cn(K \cup \{ P_I \})$. In the other case, the result is $K$ and $K \subseteq K$. For ($\dot{+}$4), suppose that $b<R(I)$. From $P_I \in K$, it follows that $K \dot{+} I=Cn(K \cup \{p\})=Cn(K)=K$. If not $b <R(I)$, then the result is $K$ by definition. For ($\dot{+}$5), first assume $b<R(I)$. Since $K_1 \subseteq K_2$, by ($Cn$3) we conclude that $Cn(K_1 \cup \{P_I\}) \subseteq Cn(K_2 \cup \{P_I \})$. So, $K_1 \dot{+} I \subseteq K_2 \dot{+} I$. For the other case, by definition $K_1 \dot{+} I=K_1$ and $K_2 \dot{+} I=K_2$. Hence by assumption, $K_1 \dot{+} I \subseteq K_2 \dot{+} I$.

For ($\dot{+}$6), assume that $K^\ast$ satisfies ($\dot{+}$1) and ($\dot{+}$3)-($\dot{+}$5) and contains $P_I$. Assume the case that $b<R(I)$. Suppose that not $K \dot{+} I \subseteq K^\ast$. It means that, there is a proposition $p$ such that $p \in K \dot{+} I$ and $p \notin K^\ast$. Since $p \in K \dot{+} I$, then by definition, there is a finite $K' \subseteq K \cup \{ P_I \}$ such that $K' \vdash p$. We have two possible conditions. Either $P_I \notin K'$ or $P_I \in K'$. The former case deduces that $K' \subseteq K$ and so, $p \in Cn(K)$. Thus by inclusion, $p \in K^\ast$, that is contradictory. Assume the latter case, i.e., $P_I \in K'$. We have $(K' \setminus \{P_I\}) \cup \{ P_I\} \vdash p$. By deduction theorem we deduce that $K' \setminus \{ P_I\} \vdash P_I \to p$. Since $K' \setminus \{ P_I\} \subseteq K$, then $P_I \to p \in Cn(K)$. From inclusion, it follows that $P_I \to p \in K^\ast$. Since by assumption $K^\ast$ contains $P_I$, then by modus ponens and closure, it is easy to show that $p \in K^\ast$. That is again in contradiction with the assumption. For the case that not $b<R(I)$, by the definition the result is $K$ and by inclusion $K \subseteq K^\ast$. 
\end{proof}

\subsection{Contraction}

For contracting a given belief set $K$ with respect to an input $I$, we are aiming to eliminate $P_I$ from $K$ by the claim of $S_I$. We expect this change to happen whenever $I$ has more amount of reliability than the amount of believability of $P_I$ in $K$. So, we want to constraint $E(P_I)<R(I)$ be satisfied for contraction. In \cite{GM88}, by having a standard relation for epistemic entrenchment, e.g., $\leq_{EE}$, a construction for AGM contraction called entrenchment-based contraction is defined as follow:

\begin{equation} \label{EECD}
K-p=
\begin{cases}
\{ q \in K:p <_{EE} p \lor q \}&\textit{if } p \notin Cn(\varnothing)\\
K&\textit{otherwise}
\end{cases}
\end{equation}

In the above definition, $<_{EE}$ is defined as usual. Since every epistemic function induces a standard epistemic entrenchment relation on propositions, we can define source-sensitive contraction based on equation \ref{EECD} by changing its constraint.

\begin{definition}
For a belief set $K$ and an input $I$, the function $\dot{-}$ is a source-sensitive expansion if and only if:
\begin{center}
$K \dot{-} I=
\begin{cases}
\{p \in K: E(P_I) < E(P_I \lor p) \}&\textit{if } E(P_I)<R(I)\\
K&\textit{otherwise}
\end{cases}$
\end{center}
\end{definition}

Again, we should change some notations to have the correspondences of AGM contraction postulates. The important point is that we are compelled to consider postulates ($\dot{-}$5), ($\dot{-}$7) and ($\dot{-}$8) in single-source conditions, because in our multi-source framework there is not (and must not be) any relation between inputs with different sources. These postulates only make sense in single-source environments, like AGM. Similar to source-sensitive expansion, the success postulate\footnote{With respect the success postulate for contraction, if $-$ is an AGM contraction, then $p \notin K-p$ whenever $p \notin Cn(\varnothing)$.} will be replaced by a weakening of that which is called relative success. It specifies ``all or nothing'' approach for constructing our non-prioritized contraction. The corresponding postulates to AGM contraction postulates are:

\begin{description}
\item [($\dot{-}$1)]
$K \dot{-} I$ is a belief set. (closure)

\item [($\dot{-}$2)]
$P_I \notin K \dot{-} I$ or $K \dot{-} I=K$. (relative success)$^{\ast}$

\item [($\dot{-}$3)]
$K \dot{-} I \subseteq K$. (inclusion)

\item [($\dot{-}$4)]
If $P_I \notin K$, then $K \dot{-} I=K$. (vacuity)

\item [($\dot{-}$5)]
If $S_{I_1} = S_{I_2}$ and $Cn(P_{I_1})=Cn(P_{I_2})$, then $K \dot{-} I_1=K \dot{-} I_2$. (extensionality)

\item [($\dot{-}$6)]
$K \subseteq Cn((K \dot{-} I) \cup \{P_I\})$. (recovery)

\item [($\dot{-}$7)]
If $S_{I_1}=S_{I_2}=S_{I_3}$ and $P_{I_3} = P_{I_1} \land P_{I_2}$, then $K \dot{-} I_1 \cap K \dot{-} I_2 \subseteq K \dot{-} I_3$. (conjunctive overlap)

\item [($\dot{-}$8)]
If $S_{I_1}=S_{I_2}=S_{I_3}$ and $P_{I_3} = P_{I_1} \land P_{I_2}$ and $P_{I_1} \notin K \dot{-} I_3$, then $K \dot{-} I_3 \subseteq K \dot{-} I_1$. (conjunctive inclusion)
\end{description}

Now we can show the following theorem:

\begin{theorem}
Source-sensitive contraction satisfies ($-$1)-($-$8),
\end{theorem}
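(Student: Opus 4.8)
The plan is to observe that in the ``active'' case $E(P_I)<R(I)$ the function $\dot{-}$ is precisely the Gärdenfors--Makinson entrenchment-based contraction of (\ref{EECD}) driven by the relation $\leq_E$ given by $p\leq_E q$ iff $E(p)\leq E(q)$, so that verifying the postulates is essentially the classical argument replayed inside PAC, while in the ``vacuous'' case $E(P_I)\geq R(I)$ we have $K\dot{-}I=K$. First I would collect the facts about the background logic that the classical proof silently uses: from Table~\ref{PACCM} (or Table~\ref{NSPAC}) one reads off that in PAC $P_I\to q\dashv\vdash\neg P_I\lor q$, so $P_I\lor(P_I\to q)\dashv\vdash (P_I\lor\neg P_I)\lor q$ is a theorem; that distributivity $(a\land b)\lor c\dashv\vdash(a\lor c)\land(b\lor c)$, absorption $a\lor(a\land b)\dashv\vdash a$, and monotonicity of $\lor$ under $\vdash$ hold; and that $Cn$ obeys ($Cn$1)--($Cn$5) by the theorem above. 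I would also record two standing facts used throughout: by ($E$1) every theorem is $\leq_E$-maximal, so (as for epistemic entrenchment \cite{GM88}) $\vdash p$ gives $E(p)=t$; and in the active case $E(P_I)<R(I)\leq t$, so $E(P_I)<t$.

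The routine postulates go as follows. ($\dot{-}$3) is immediate in both cases. For ($\dot{-}$2): in the active case $P_I\lor P_I\dashv\vdash P_I$ gives $E(P_I\lor P_I)=E(P_I)$, so $P_I$ fails the membership condition and $P_I\notin K\dot{-}I$; in the vacuous case $K\dot{-}I=K$. For ($\dot{-}$4): if $P_I\notin K$ then $K\neq K_\bot$, so ($E$3) gives $E(P_I)=b$ and $E(q)>b$ for every $q\in K$, whence $E(P_I\lor q)\geq E(q)>b=E(P_I)$ by ($E$1); thus $K\subseteq K\dot{-}I$ in the active case and, with ($\dot{-}$3), $K\dot{-}I=K$. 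For ($\dot{-}$1) in the active case, let $S=K\dot{-}I$ and $r\in Cn(S)$; by ($Cn$4) pick a finite $S_0\subseteq S$ with $S_0\vdash r$, and let $q$ be the conjunction of $S_0$ (taking $q=\top$ if $S_0=\varnothing$); then $q\in K$ and $q\vdash r$, so $r\in K$, while iterated ($E$2) plus distributivity give $E(P_I\lor q)=E(\bigwedge_{q_i\in S_0}(P_I\lor q_i))\geq\min_i E(P_I\lor q_i)>E(P_I)$, and monotonicity of $\lor$ gives $E(P_I\lor r)\geq E(P_I\lor q)>E(P_I)$, so $r\in S$. For ($\dot{-}$5): $Cn(P_{I_1})=Cn(P_{I_2})$ and the common source yield $E(P_{I_1})=E(P_{I_2})$ by ($E$1) and $R(I_1)=R(I_2)$ by ($R$), so both inputs fall in the same case, and in the active case the defining sets agree since $P_{I_1}\lor q\dashv\vdash P_{I_2}\lor q$. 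For ($\dot{-}$6): in the vacuous case $Cn((K\dot{-}I)\cup\{P_I\})=Cn(K\cup\{P_I\})\supseteq K$; in the active case, for $q\in K$ the proposition $P_I\to q$ lies in $K$ (since $q\vdash P_I\to q$ and $K=Cn(K)$) and satisfies $E(P_I\lor(P_I\to q))=t>E(P_I)$ because $P_I\lor(P_I\to q)$ is a theorem, so $P_I\to q\in K\dot{-}I$; then modus ponens inside $Cn((K\dot{-}I)\cup\{P_I\})$ returns $q$.

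The substantive work is ($\dot{-}$7)--($\dot{-}$8), where the classical entrenchment argument must be reconciled with the new activation condition ``$E(P_I)<R(I)$'', since each $\dot{-}I_j$ is independently active or vacuous; the decisive tool is ($R$), which for the common source and $P_{I_3}=P_{I_1}\land P_{I_2}\vdash P_{I_1},P_{I_2}$ gives $R(I_3)\leq R(I_1)$ and $R(I_3)\leq R(I_2)$. For ($\dot{-}$7), take $q\in(K\dot{-}I_1)\cap(K\dot{-}I_2)$, so $q\in K$; if $\dot{-}I_3$ is vacuous then $q\in K=K\dot{-}I_3$; if $\dot{-}I_3$ is active, then ($E$2) applied to $P_{I_1}\lor q$ and $P_{I_2}\lor q$, together with $(P_{I_1}\lor q)\land(P_{I_2}\lor q)\dashv\vdash P_{I_3}\lor q$, gives $E(P_{I_3}\lor q)\geq E(P_{I_i}\lor q)$ for some $i\in\{1,2\}$, and for that $i$ either $\dot{-}I_i$ is active with $E(P_{I_i}\lor q)>E(P_{I_i})\geq E(P_{I_3})$, or $\dot{-}I_i$ is vacuous with $E(P_{I_i}\lor q)\geq E(P_{I_i})\geq R(I_i)\geq R(I_3)>E(P_{I_3})$; either way $E(P_{I_3})<E(P_{I_3}\lor q)$, so $q\in K\dot{-}I_3$. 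For ($\dot{-}$8), from $P_{I_1}\notin K\dot{-}I_3$: if $\dot{-}I_3$ is vacuous then $P_{I_1}\notin K$ and ($\dot{-}$4) gives $K\dot{-}I_1=K=K\dot{-}I_3$; if $\dot{-}I_3$ is active, absorption gives $P_{I_3}\lor P_{I_1}\dashv\vdash P_{I_1}$, so $P_{I_1}\notin K\dot{-}I_3$ forces either $P_{I_1}\notin K$ (use ($\dot{-}$4) again) or $E(P_{I_3})=E(P_{I_1})$, and in the latter subcase $E(P_{I_1})=E(P_{I_3})<R(I_3)\leq R(I_1)$ makes $\dot{-}I_1$ active, so for $q\in K\dot{-}I_3$ one has $E(P_{I_1})=E(P_{I_3})<E(P_{I_3}\lor q)\leq E(P_{I_1}\lor q)$ because $P_{I_3}\lor q\vdash P_{I_1}\lor q$, hence $q\in K\dot{-}I_1$. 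The main obstacle is exactly this bookkeeping of the active/vacuous status of the three contractions at once and checking that ($R$)-dominance constrains it coherently; the PAC equivalences above are immediate from the matrices, and the single-source postulates are the classical proof carried over essentially verbatim.
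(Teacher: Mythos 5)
Your proposal is correct and follows the paper's overall strategy: split each postulate on whether the contraction is active ($E(P_I)<R(I)$) or vacuous, and in the active case replay the G\"ardenfors--Makinson entrenchment argument with the order induced by $E$, using ($E$1)--($E$4), ($R$), compactness and the PAC equivalences (distributivity, absorption, the theoremhood of $P_I\lor(P_I\to q)$). Your treatments of ($\dot{-}$1)--($\dot{-}$6) match the paper's essentially line for line. The only real divergence is in how you decompose the two conjunction postulates. For ($\dot{-}$7) the paper trichotomizes on $E(P_{I_1})$ versus $E(P_{I_2})$, shows in the strict cases that the whole set $K\dot{-}I_1$ (resp.\ $K\dot{-}I_2$) is contained in $K\dot{-}I_3$, and handles the equality case separately; you instead argue element-wise, letting ($E$2) pick which disjunct $P_{I_i}\lor q$ bounds $E(P_{I_3}\lor q)$ from below and then splitting on whether $\dot{-}I_i$ is active or vacuous, the vacuous branch being closed by the chain $E(P_{I_i})\geq R(I_i)\geq R(I_3)>E(P_{I_3})$. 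For ($\dot{-}$8) the paper derives $E(P_{I_1})\leq E(P_{I_2})$ by contradiction and then argues contrapositively (from $p\notin K\dot{-}I_1$ to $p\notin K\dot{-}I_3$), whereas you use absorption to get $E(P_{I_3}\lor P_{I_1})=E(P_{I_1})$, conclude $E(P_{I_1})=E(P_{I_3})$ outright, and prove the inclusion forwards. Both routes are sound; yours makes the bookkeeping of the three activation conditions somewhat more transparent, while the paper's stays closer to the classical single-source proof. (Both arguments share the same mild looseness in ($\dot{-}$6), namely concluding that a PAC theorem has entrenchment exactly $t$, which uses the converse direction of ($E$4); this is inherited from the paper, not a defect of your proposal.)
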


\begin{proof}
Again, we show the satisfiability of postulates for both cases $E(P_I)<R(I)$ and not $E(P_I)<R(I)$. In the first case, the properties of PAC provide the possibility of getting help of the proof of original case (\cite{GM88}). However, using PAC and being non-prioritized make some changes in proofs.

For ($-$1), suppose that $E(P_I)<R(I)$. We must show that by having a set $A \subseteq K \dot{-} I$ such that $p \in Cn(A)$, we can conclude that $p \in K \dot{-} I$. Clearly, each element of $A$ is also an element of $K$. Since $K$ is a belief set, $p$ is an element of $K$. By ($Cn$4), there is a finite set $A'=\{ p_1,p_2,...,p_n:p_i \in A \}$ such that $p \in Cn(A')$, i.e., $A' \vdash p$. By the rules of PAC, we have $p_\land := p_1 \land p_2 \land ... \land p_n \vdash p$. 

For showing that $p \in K \dot{-} I$, we must prove that $E(P_I) < E(P_I \lor p)$. Since each $p_i \in A'$ is an element of $K \dot{-} I$, then by definition for every $p_i$, $E(P_I)<E(P_I \lor p_i)$. By ($E$1) and ($E$2) it is easy to show that $E(P_I)<E((P_I \lor p_1) \land (P_I \lor p_2) \land ... \land (P_I \lor p_n))$. In PAC, $\vdash ((P_I \lor p_1) \land (P_I \lor p_2) \land ... \land (P_I \lor p_n)) \leftrightarrow (P_I \lor p_\land)$. So by ($E$1), $E(P_I)<E(P_I \lor p_\land)$. Since $p_\land \vdash p$, by the rules of PAC, $P_I \lor p_\land \vdash P_I \lor p$ and by ($E$1), $E(P_I \lor p_\land) \leq E(P_I \lor p)$. Therefore, it deduces that $E(P_I)<E(P_I \lor p)$. We are done.

For the case that not $E(P_I)<R(I)$, the result is $K$ that is a belief set.

For ($-$2), suppose that $E(P_I)<R(I)$. Clearly, not $E(P_I) < E(P_I \lor P_I)$. So by definition, $P_I \notin K \dot{-} I$. If it is not the case that $E(P_I)<R(I)$, then we have $K \dot{-} I=K$.

For ($-$3), since by definition, the elements of the output are also the elements of $K$. Hence, $K \dot{-} I \subseteq K$.

For ($-$4), assume that $P_I \notin K$. By ($E$3), we deduce that $E(P_I)=b$. Now if $E(P_I)<R(I)$, then the result is $\{p \in K: E(P_I)<E(P_I \lor p) \}$. By ($E$1) and ($E$3), it is easy to see that for every $p \in K$, $E(P_I)<E(p) \leq E(P_I \lor p)$. So $K \dot{-} I=K$. If not $E(P_I)<R(I)$, the result is obvious.

For ($-$5), since $Cn(P_{I_1})=Cn(P_{I_2})$, from reflexivity of $\vdash$ and deduction theorem, it follows that $\vdash P_{I_1} \leftrightarrow P_{I_2}$. Hence, by ($E$1) and ($R$) we conclude that $E(P_{I_1})=E(P_{I_2})$ and $R(I_1)=R(I_2)$. So, $E(P_{I_1})<R(I_1)$ if and only if $E(P_{I_2})<R(I_2)$. Assume that $E(P_{I_1})<R(I_1)$. So, $K \dot{-} I_1= \{p \in K: E(P_{I_1}) < E(P_{I_1} \lor p) \}$ and $K \dot{-} I_2= \{p \in K: E(P_{I_2}) < E(P_{I_2} \lor p) \}$. Suppose that $p \in K \dot{-} I_1$. It means that $p \in K$ and $(P_{I_1}) < E(P_{I_1} \lor p) \}$. Since $\vdash P_{I_1} \leftrightarrow P_{I_2}$, then $\vdash P_{I_1} \lor p \leftrightarrow P_{I_2} \lor p$. Hence by ($E$1), $(P_{I_2}) < E(P_{I_2} \lor p) \}$. So, $p \in K \dot{-} I_2$ and $K \dot{-} I_1 \subseteq K \dot{-} I_2$. The reverse is similar. Therefore $K \dot{-} I_1=K \dot{-} I_2$. Now assume not $E(P_{I_1}) < R(I_1)$. It deduce that not $E(P_{I_2}) < R(I_2)$. In this case, by definition we have $K \dot{-} I_1=K \dot{-} I_2=K$.

For ($-$6), first assume that $E(P_I)<R(I)$. Clearly by ($E$4) and the definition of $R$, $\nvdash P_I$. By definition, $Cn((K \dot{-} I) \cup \{P_I\})=Cn(\{ p \in K: E(P_I) \leq E(P_I \lor p)\} \cup \{P_I \})$. Now we want to show that $K \subseteq Cn((K \dot{-} I) \cup \{P_I\})$. Thus by taking $p \in K$, we will show that $p \in Cn((K \dot{-} I) \cup \{P_I\})$. Since in PAC for every $p$ and $q$ we have $p \vdash q \to p$, by closure of $K$, we conclude that $P_I \to p \in K$. Since $P_I \lor (P_I \to p)$ is theorem in PAC and $\nvdash P_I$, then by ($E$4), $E(P_I) < E(P_I \lor (P_I \to p))$. Hence by definition, $P_I \to p \in K \dot{-} I$ and by ($Cn$3), $P_I \to p \in Cn((K \dot{-} I) \cup \{P_I\})$. From ($Cn$1), it follows that $P_I \in Cn((K \dot{-} I) \cup \{P_I\})$. So by modus ponens, it deduces that $p \in Cn((K \dot{-} I) \cup \{P_I\})$. Hence, $K \subseteq Cn((K \dot{-} I) \cup \{P_I\})$. Now suppose that not $E(P_I)<R(I)$. Then by definition, $Cn((K \dot{-} I) \cup \{P_I\}) =Cn(K \cup \{P_I \})$ and the result is obvious.

For ($-$7), assume that $E(P_{I_3})<R(I_3)$. Independently, three conditions are possible for $E(P_{I_1})$ and $E(P_{I_2})$. $E(P_{I_1})<E(P_{I_2})$ or $E(P_{I_2})<E(P_{I_1})$ or $E(P_{I_1})=E(P_{I_2})$. Consider the first case. By ($E$1) and ($E$2), we have $E(P_{I_1}) \leq E(P_{I_3})$. Now, ($R$) deduces that $R(I_3) \leq R(I_1)$. So, we have $E(P_{I_1})<R(I_1)$. We want to show that $K \dot{-} I_1 \subseteq K \dot{-} I_3$. Suppose that $p \in K \dot{-} I_1$. It means that $p \in K$ and $E(P_{I_1})<E(P_{I_1} \lor p)$. It deduces that $E(P_{I_3})<E(P_{I_1} \lor p)$. Now since $E(P_{I_1})<E(P_{I_2})$, by ($E$1) and ($R$), we have $E(P_{I_3}) \leq E(P_{I_1}) <E(P_{I_2}) \leq E(P_{I_2} \lor p)$. Hence by ($E$1) and ($E$2), $E(P_{I_3})<E((P_{I_1} \lor p) \land (P_{I_2} \lor p))$. Since $\vdash (P_{I_3} \lor p) \leftrightarrow (P_{I_1} \lor p) \land (P_{I_1} \lor p)$, from ($E$1) we conclude $E(P_{I_3})<E(P_{I_3} \lor p)$. So by definition, $p \in K \dot{-} I_3$ and $K \dot{-} I_1 \subseteq K \dot{-} I_3$. Therefore, $K \dot{-} I_1 \cap K \dot{-} I_2 \subseteq K \dot{-} I_3$. For the case that $E(P_{I_2})<E(P_{I_1})$, the proof is similar.

Now assume the third case, i.e., $E(P_{I_1})=E(P_{I_2})$. So, since $E(P_{I_3})<R(I_3)$, both constraints $E(P_{I_1})<R(I_1)$ and $E(P_{I_2})<R(I_2)$ are satisfied. Now take $p \in K \dot{-} I_1 \cap K \dot{-} I_2$. It means that $p \in K$ and $E(P_{I_3}) \leq E(P_{I_1})<E(P_{I_1} \lor p)$ and $E(P_{I_3}) \leq E(P_{I_2})<E(P_{I_2} \lor p)$. Therefore $E(P_{I_3})<E((P_{I_1} \lor p) \land (P_{I_1} \lor p))$. So as we seen, it deduces that $E(P_{I_3})<E(P_{I_3} \lor p)$. Hence by definition, $p \in K \dot{-} I_3$ and $K \dot{-} I_1 \cap K \dot{-} I_2 \subseteq K \dot{-} I_3$. The only remained case is when not $E(P_{I_3})<R(I_3)$. In this case, by definition $K \dot{-} I_3=K$. From ($\dot{-}$2) it follows that $K \dot{-} I_1 \subseteq K$ and $K \dot{-} I_2 \subseteq K$. Therefore again, $K \dot{-} I_1 \cap K \dot{-} I_2 \subseteq K \dot{-} I_3$.

For ($-$8), first suppose that $E(P_{I_1})<R(I_1)$. It follows from assumption $P_{I_1} \notin K \dot{-} I_3$, that either $P_{I_1} \notin K$ or it is the case that $P_{I_1} \in K$ and $E(P_{I_3})<R(I_3)$ and not $E(P_{I_3})<E(P_{I_3} \lor P_{I_1})$. In the first case, by ($\dot{-}$3), $K \dot{-} I_1=K$ and the postulate is satisfied. Now take the second case. We can show that if not $E(P_{I_3})<E(P_{I_3} \lor P_{I_1})$, then we have $E(P_{I_1}) \leq E(P_{I_2})$. It is because if $E(P_{I_2})<E(P_{I_1})$, then by ($E$1) we conclude that $E(P_{I_2})<E(P_{I_1} \lor P_{I_2})$. Then by having ($E$2), we have $E(P_{I_2})<E((P_{I_1} \lor P_{I_2}) \land P_{I_1})$. Since in PAC $\vdash ((P_{I_1} \lor P_{I_2}) \land P_{I_1}) \leftrightarrow (P_{I_3} \lor P_{I_1})$, then ($E$1), deduces $E(P_{I_3}) \leq E(P_{I_2})<E((P_{I_3} \lor P_{I_1}))$. Contradiction. So, $E(P_{I_1}) \leq E(P_{I_2})$.

Now, we must show that in this condition, we have $K \dot{-} I_3 \subseteq K \dot{-} I_1$. Take $p \notin K \dot{-} I_1$. It means either $p \notin K$ or $p \in K$ and $E(P_{I_1})<R(I_1)$ and not $E(P_{I_1})<E(P_{I_1} \lor p)$. The first case clearly deduces that $p \notin K \dot{-} I_3$. Assume the second case. Since we have $E(P_{I_1}) \leq E(P_{I_2})$ and $\vdash (P_{I_3} \lor p) \leftrightarrow ((P_{I_1} \lor p) \land (P_{I_2} \lor p))$, then by ($E$1) and ($E$2) and assumption not $E(P_{I_1})<E(P_{I_1} \lor p)$, we conclude that $E(P_{I_3} \lor p) \leq E(P_{I_1} \lor p) \leq E(P_{I_1}) \leq E(P_{I_3})$. We had before $E(P_{I_3})<R(I_3)$. Hence, by definition $p \notin K \dot{-} I_3$. The only remaining case is when not $E(P_{I_1})<R(I_1)$. In this case, by definition we have $K \dot{-} I_1 =K$, Therefore by ($\dot{-}$2), $K \dot{-} I_3 \subseteq K \dot{-} I_1$.
\end{proof}

\subsection{Defining Revision}

As said before, revision should not be considered as a main kind of belief change in the paraconsistent context. However, it is possible to define it as a derived change from expansion and contraction. The standard way of doing this, is Levi identity, i.e.:

\begin{equation}
K \ast_{L} p=Cn((K- \neg p) \cup \{p\})
\end{equation}

It is shown that if $-$ is an AGM contraction function, then $\ast_L$ will be an AGM revision. The original form of Levi identity is not suitable in our framework, for two reasons. First, in our multi-source environment there is no relation between an input with proposition $p$ and another one with proposition $\neg p$. Second, since the last step of Levi identity is expansion, it almost disregards our non-prioritized approach.

Thus, it should be modified in order to fit our proposed framework. For every input $I=(p,s)$, we use the auxiliary notation $\overline{I}$, such that $\overline{I}=(\neg p,s)$. Now, the revision process on $K$ with respect to $I$ will be contracting it by $\overline{I}$ and then expanding it by $I$. These changes will be done whenever their considered restrictions are satisfied. Therefore, one way to define the source-sensitive revision function, denoted by $\dot{\ast}$, is as follows:

\begin{equation}
K \dot{\ast} I=
\begin{cases}
(K \dot{-} \overline{I}) \dot{+} I & \textit{if } E(P_{\overline{I}})<R(\overline{I}) \textit{ and } b<R(I)\\
K & \textit{otherwise}
\end{cases}
\end{equation}

However, it is not the only way. Another approach can be modifying reverse Levi identity (\cite{Han93}), that is:

\begin{equation}
K \ast_{RL} p = (K+p)- \neg p
\end{equation}

In the original case, whenever $\neg p \in K$ we will have $K+p=K_\bot$. Hence, this definition does not fit the AGM model well. One way to avoid triviality, is to use belief bases (\cite{Fur91, Han91}) instead of belief sets. Belief bases are sets of propositions that are not closed under logical consequences. Since in our framework inconsistencies do not explode to triviality, the problem will not happen. Thus by some necessary modifications, also reverse Levi identity can be used for defining source-sensitive revision.

However, by avoiding primacy of the input and consistency criterion, many of AGM revision postulates will be lost. It is important for many various applications of belief revision, to find the best way of this kind of change.

\section{Conclusion}

In this paper, we introduced a new framework, called source-sensitive belief change, based on AGM to address three criticisms applied to the AGM model, i.e., single-agent environment, primacy of input and consistency criterion. We specified our position in each extended subject for those criticisms, called multi-agent belief change, non-prioritized belief change and paraconsistent belief change. Naturally, there are some similarities and relations between our model and other proposals (\cite{DG96}, \cite{RS95}, \cite{Tam12} and \cite{Tam14}) in these fields and also there are several motivations, features and properties that are exclusive in the recent framework. 

As showed, the changes we applied for constructing our desired model, resulted in persevering AGM postulates as much as possible. However, some definitions we used are new and defined with very general attributes. Hence, ways of improving them are open and need to be studied.

\end{document}